\documentclass[letterpaper, 10 pt, conference]{ieeeconf}  

\IEEEoverridecommandlockouts                              

\usepackage{graphics} 
\usepackage{subfig} 
\usepackage{wrapfig}

\usepackage{amsthm} 
\usepackage{amsmath,amssymb}
\usepackage{graphicx}
\usepackage{tabularx}

\usepackage{multirow, makecell}
\usepackage{diagbox}
\usepackage{slashbox}
\usepackage{rotating}
\usepackage{cite}

\usepackage{url}
\usepackage{bm}
\usepackage[table]{xcolor}

\usepackage{hyperref}
\usepackage{siunitx} 
\usepackage{booktabs}
\usepackage{cleveref}
\usepackage{mathtools} 
\usepackage{upgreek} 

\let\labelindent\relax 
\usepackage{enumitem} 

\usepackage{lipsum}

\usepackage[ruled,vlined]{algorithm2e}

\usepackage[nolist, nohyperlinks]{acronym}
\acrodef{MPC}[MPC]{Model Predictive Control}
\acrodef{QP}[QP]{Quadratic Program}
\acrodef{CBF}[CBF]{Control Barrier Function}

\newcommand{\vx}{{\boldsymbol x}}
\newcommand{\vu}{{\boldsymbol u}}

\newcommand{\vy}{{\boldsymbol y}}
\newcommand{\vz}{{\boldsymbol z}}

\newcommand{\vp}{{\boldsymbol p}} 
\newcommand{\vv}{{\boldsymbol v}} 

\newcommand{\calC}{\mathcal{C}} 

\newcommand{\calX}{\mathcal{X}} 
\newcommand{\calW}{\mathcal{W}} 
\newcommand{\calB}{\mathcal{B}}
\newcommand{\calO}{\mathcal{O}} 
\newcommand{\calS}{\mathcal{S}} 

\newcommand{\R}{\mathbb{R}}
\newcommand{\calU}{\mathcal{U}}

\newcommand{\calH}{\mathcal{H}}
\newcommand{\calI}{\mathcal{I}}
\newcommand{\phib}{\varphi_{\textup{b}}}
\newcommand{\Phib}{\Phi_{\textup{b}}}
\newcommand{\pib}{\pi_{\textup{b}}}
\newcommand{\fb}{f_{\textup{b}}}

\newcommand{\dist}{\operatorname{dist}}

\newtheorem{proposition}{Proposition}

\theoremstyle{definition}
\theoremstyle{definition}
\newtheorem{problem}{Problem}

\theoremstyle{definition}

\theoremstyle{definition}

\theoremstyle{definition}
\newtheorem{assumption}{Assumption}

\usepackage{fontawesome5}

\newif\ifstatementboxes
\statementboxestrue             

\ifstatementboxes
  \usepackage[skins,breakable]{tcolorbox}

  \definecolor{statementblue}{RGB}{239,247,252}
  \definecolor{statementline}{RGB}{210,220,235}

  \definecolor{assumptionblue}{RGB}{242,249,244}
  \definecolor{assumptionline}{RGB}{211,228,216}

  \definecolor{problemblue}{RGB}{243,246,253}
  \definecolor{problemline}{RGB}{215,222,238}

  \definecolor{proofblue}{RGB}{251,253,255}
  \definecolor{proofline}{RGB}{238,242,247}

  \tcbset{
    paperbox/.style={
      enhanced, breakable,
      boxrule=0pt, frame hidden,
      arc=0pt, outer arc=0pt, boxsep=0pt,
      left=7pt, right=6pt, top=6pt, bottom=6pt,
      before skip=6pt, after skip=6pt
    },
    paperstatement/.style={
      paperbox,
      colback=statementblue, colframe=statementblue,
      borderline west={3pt}{0pt}{statementline}
    },
    paperassumption/.style={
      paperbox,
      colback=assumptionblue, colframe=assumptionblue,
      borderline west={3pt}{0pt}{assumptionline}
    },
    paperproblem/.style={
      paperbox,
      colback=problemblue, colframe=problemblue,
      borderline west={3pt}{0pt}{problemline}
    },
    paperproof/.style={
      paperbox,
      colback=proofblue, colframe=proofblue,
      borderline west={3pt}{0pt}{proofline}
    }
  }

  \tcolorboxenvironment{theorem}{paperstatement}
  \tcolorboxenvironment{lemma}{paperstatement}
  \tcolorboxenvironment{proposition}{paperstatement}
  \tcolorboxenvironment{assumption}{paperassumption}
  \tcolorboxenvironment{problem}{paperproblem}
  \tcolorboxenvironment{proof}{paperproof}
\fi
\makeatother

\DeclareCaptionFont{mysize}{\fontsize{8}{9.6}\selectfont}
\title{\LARGE \bf Predictive Semantic Safety: \\From Visual Physical Reasoning to Safety-Critical Control}
\author{Taekyung Kim$^{1, *}$, Salem Fradi$^{2, *}$, Yanning Dai$^{2}$, Mateusz Ostaszewski$^{2}$, Jürgen Schmidhuber$^{2,3,4,5}$
\thanks{$^{*}$These authors contributed equally to this work}
\thanks{$^{1}$Department of Robotics, University of Michigan, Ann Arbor, MI 48109, USA {\tt\footnotesize taekyung@umich.edu} } %
\thanks{$^{2}$Center of Excellence in Generative AI, King Abdullah University of Science and Technology (KAUST), Thuwal, Saudi Arabia.}
\thanks{$^{3}$Dalle Molle Institute for Artificial Intelligence Research (IDSIA), Switzerland. $^{4}$Universit\`a della Svizzera italiana (USI), Switzerland. $^{5}$Scuola universitaria professionale della Svizzera italiana (SUPSI), Switzerland.}
}
\begin{document}
\maketitle
\thispagestyle{empty}
\pagestyle{empty}

\begin{abstract} 
Physical interactions can create future hazards that are not apparent from the robot's current geometric surroundings. We present a framework termed Predictive Semantic Safety (PSS), which connects visual physical reasoning to backup-based safety filtering. A vision-language model (VLM) predicts physical events and their timing or directly predicts object displacements. An explicit motion model converts event hypotheses into object trajectories. Split conformal prediction calibrates position errors jointly across specified objects, observation times, and future times; geometric shape bounds convert the resulting position regions into predicted object occupancy. PSS evaluates a prescribed backup maneuver against this occupancy and derives input-affine constraints for minimally modifying the nominal input while preserving backup feasibility under the robot dynamics and input limits. MuJoCo experiments with a Unitree Go1 consider falling fixtures, impact-driven support loss, and contact propagation. PSS achieves a safe episode rate of 99.3\%, compared with 43.3\% for a Backup Control Barrier Function baseline that only uses current obstacle geometry.
\href{https://www.taekyung.me/pss}{\textcolor{red}{[Project Page]}}\footnote{Project page: \href{https://www.taekyung.me/pss}{https://www.taekyung.me/pss}} \href{https://github.com/AlKeshi/predictive-semantic-safety}{\textcolor{red}{[Code]}} \href{https://youtu.be/p9E79YJ-2Fk}{\textcolor{red}{[Video]}} 
\end{abstract}




\section{INTRODUCTION}
\label{sec:introduction} 
Robots navigating homes, warehouses, and industrial facilities must account
for hazards that develop as their surroundings change. Water spreading
across a corridor can make the floor unsafe to traverse, a growing fire
can block an exit, and an unstable stack of boxes can collapse into an
otherwise clear passage. Assessing these situations requires reasoning
beyond current obstacle geometry to anticipate how physical interactions
may change the safety of the surrounding space. Consider the partially
detached ceiling fixture in Fig.~\ref{fig:concept}. Although the floor
beneath it remains clear, an approaching robot may need to brake before
the fixture falls. Waiting until the hazard enters its path may leave
insufficient control authority to avoid collision. Safety therefore
depends on anticipating physical changes and responding while a safe
maneuver remains feasible. 

Visual physical reasoning provides predictions of how observed objects may move and interact. Physical-interaction benchmarks and learned dynamics models study this capability across a range of scenes and motion types~\cite{bear2021physion,han2022sgnn,yuan2024egode,chow2025physbench}. More recently, vision-language models (VLMs) have shown increasing capability to reason about physical events directly from visual observations. Code-as-World, for example, uses executable world representations to train VLMs for quantitative physical reasoning~\cite{wang2026codeasworlds}. These advances make it possible to anticipate how a physical event may change the scene, but they do not directly specify how the robot should respond. A safety-critical controller must additionally account for the predicted object's spatial extent, the
timing and uncertainty of its predicted motion, and the robot's
dynamics and input limits~\cite{kim_your_2026}.

\begin{figure}[!t]
\centering
\includegraphics[width=\columnwidth]{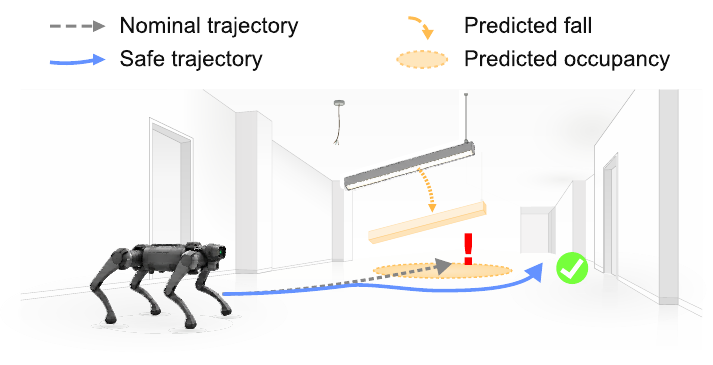}
\caption{Motivating example of predictive semantic safety. A partially
detached ceiling fixture remains above an apparently clear route, but its
anticipated fall may leave the robot with insufficient control authority
to avoid collision.}
\label{fig:concept}
\end{figure} 

To address this problem, we introduce a framework termed
\textbf{\emph{Predictive Semantic Safety}} (PSS), which converts
predicted physical events into time-varying unsafe occupancy for
backup-based safety filtering~\cite{kim_backupbased_2026}.
Semantic reasoning identifies physical mechanisms that may change
object motion, such as support loss, impact, or contact propagation.
Using Code-as-World-VL~\cite{wang2026codeasworlds}, we obtain
object-motion predictions either directly or through an explicit
motion model conditioned on an event hypothesis and its timing.
Split conformal prediction calibrates the resulting position errors,
and geometric shape bounds convert the position regions into
predicted object occupancy (Fig.~\ref{fig:overview}). 

Occupancy coverage alone does not establish that the robot retains
a feasible response. Building on Backup Control Barrier Functions
(CBFs)~\cite{chen_backupcbf_2021} and
OcclusionCBF~\cite{kim2026occlusioncbf}, PSS evaluates a prescribed
backup maneuver against the predicted occupancy and requires its
terminal state to admit a safe continuation. The resulting filter
minimally modifies the nominal input subject to input-affine
constraints that preserve backup feasibility under the robot
dynamics and input limits. The filter can therefore intervene
before predicted object motion obstructs the route, without
imposing a particular nominal navigation policy. 

\begin{figure*}[t] 
\centering
\includegraphics[width=\textwidth]{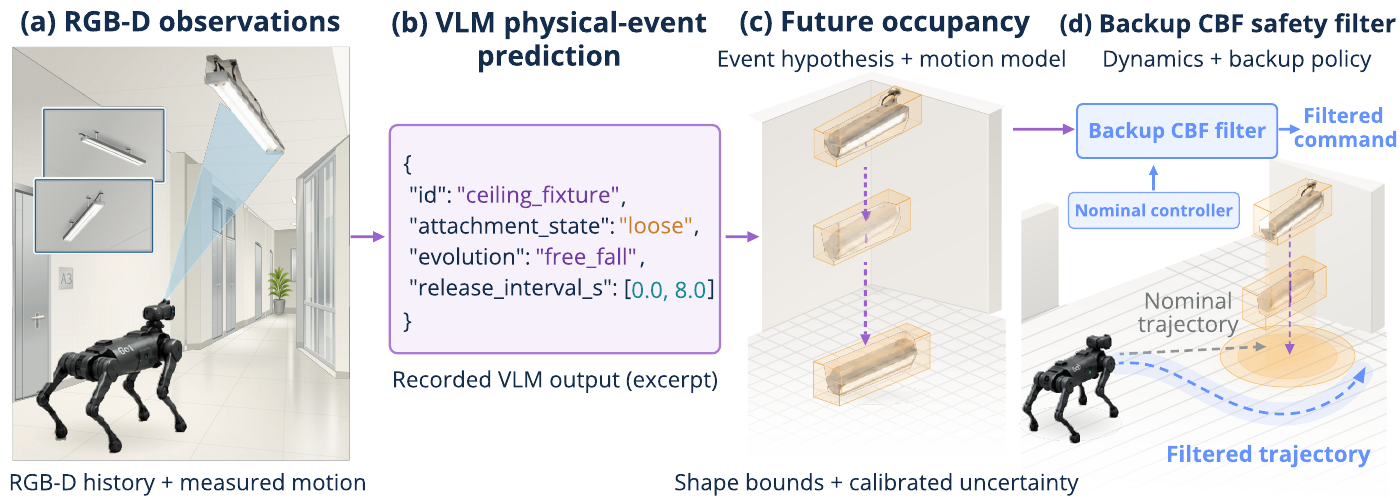}
\caption{Overview of PSS, illustrated for the ceiling fixture. RGB-D observations and measured motion are used
to predict a physical event and its timing with a VLM. The event hypothesis
and an explicit motion model determine future object motion, which is
combined with geometric shape bounds and calibrated uncertainty to obtain
unsafe occupancy. A Backup CBF safety filter evaluates a prescribed backup
maneuver against this occupancy and modifies the nominal command when
necessary. The VLM panel shows an excerpt from a recorded model output.
MuJoCo trials are shown in Fig.~\ref{fig:simulation}.}
\label{fig:overview}
\end{figure*}

\subsection{Related Work}

\textbf{Physical reasoning and semantic safety: }
Visual physical reasoning aims to predict how a scene will evolve from
what a robot observes. Benchmarks test whether models can anticipate
physical interactions and understand physical
behavior~\cite{bear2021physion,chow2025physbench}, while learned dynamics
models predict how objects move and influence one
another~\cite{han2022sgnn,yuan2024egode}. Other approaches learn structured
representations of the physical world from video for prediction and
planning, including V-JEPA~2~\cite{assran2025vjepa2}, which realizes ideas
introduced earlier in Predictability Maximization
(PMAX)~\cite{schmidhuber1993predictable}. Code-as-World uses executable
scene descriptions to train physical
reasoning~\cite{wang2026codeasworlds}. These advances provide a basis for
anticipating hazards, but a forecast alone does not determine how the
robot should respond safely. FORTRESS uses multimodal reasoning to
anticipate failure modes and fallback goals, then plans a dynamics-aware
fallback under inferred semantic
constraints~\cite{ganai2025fortress}. In contrast, PSS represents
predicted object motion as time-varying occupancy sets and constrains
the robot's input to preserve the feasibility of a prescribed backup
maneuver against that occupancy. 

\textbf{Prediction uncertainty: }
Conformal prediction has been used to account for trajectory-prediction
uncertainty in planning and MPC~\cite{lindemann_conformal_2023,
dixit_adaptive_2023,kim_learning_2025a}. Most closely related,
Stamouli et al.~\cite{stamouli2024recursive} construct simultaneous
prediction regions across multiple agents, observation times, and future
times for shrinking-horizon MPC. PSS adopts this calibration principle for
object motion predicted from visually inferred physical events and combines
the resulting position regions with geometric shape bounds to obtain
predicted object occupancy for safety filtering. This calibration accounts for prediction uncertainty, while the backup
formulation separately addresses whether the robot retains a feasible response.

\textbf{Predictive safety filtering: }
Standard CBFs construct safety constraints from the current state
\cite{ames_cbfqp_2017}, while Backup CBFs construct them from a prescribed
backup policy and its induced recoverable set
\cite{chen_backupcbf_2021}. Recent extensions evaluate multiple fallback policies through parallel
finite-horizon rollouts~\cite{kim2026plcbf}. Closest to our control formulation,
OcclusionCBF evaluates a prescribed backup rollout against reachable
occupancy of potentially hidden dynamic obstacles and a terminal
condition~\cite{kim2026occlusioncbf}. 
PSS builds on this construction, but addresses changes in the motion
of observed objects caused by visually inferred physical events.
It calibrates the resulting position errors and evaluates the backup
against the corresponding time-varying occupancy.

\subsection{Contributions} 
Our contributions are summarized as follows.
\begin{itemize}[leftmargin=*,itemsep=1pt,topsep=2pt]
    \item We introduce PSS, which converts visually inferred physical
    events and their timing into time-varying unsafe occupancy sets for
    safety-critical control.
    
    \item We combine conformal calibration with backup-based safety
    filtering to derive input-affine occupancy and terminal
    constraints that preserve backup feasibility under the robot
    dynamics and input limits.

    \item We evaluate PSS in MuJoCo quadruped navigation,
    demonstrating higher safe episode rates than both the CBF
    baselines using current obstacle geometry and the OmniVLA
    navigation policy.
\end{itemize}

\section{PRELIMINARIES}
\label{sec:preliminaries}
\subsection{Robot Dynamics}
Consider a robot with control-affine dynamics
\begin{equation}
\dot{\vx}=f(\vx)+g(\vx)\vu,
\qquad \vx\in\calX,\quad \vu\in\calU,
\label{eq:dynamics}
\end{equation}
where $\calX\subseteq\R^n$ is an open state domain,
$\calU\subset\R^m$ is compact and convex, and $f$ and $g$ are continuously
differentiable. Let $\calW\subseteq\R^d$ be the
workspace. A continuously differentiable map $P:\calX\to\calW$
gives the position $\vp=P(\vx)$ of a fixed robot reference point.
A nominal controller
$\vu_{\textup{nom}}:\calX\times\R_{\geq0}\to\calU$ commands the robot
toward its task objective but is not assumed to satisfy the safety
constraints. It may be any task-achieving policy, including a
reinforcement-learning or vision-language-action (VLA) policy.

\subsection{Control Barrier Functions}
Let $h:\calX\to\R$ be continuously differentiable and define
$\calC=\{\vx\in\calX\mid h(\vx)\geq0\}$. The function $h$ is a
control barrier function if there exists a locally Lipschitz extended
class-$\mathcal K$ function $\alpha:\mathbb{R}\rightarrow\mathbb{R}$ such that~\cite{ames_cbf_2019}
\begin{equation}
\sup_{\vu\in\calU}
\nabla h(\vx)^\top\bigl(f(\vx)+g(\vx)\vu\bigr)
\geq-\alpha\bigl(h(\vx)\bigr),
\quad \forall\vx\in\calX. \nonumber
\end{equation}
A locally Lipschitz controller satisfying the corresponding inequality
renders $\calC$ forward invariant for as long as the solution exists in
$\calX$. However, $h(\vx)\geq0$ alone does not ensure feasibility of the barrier
condition under the input bounds. Backup CBFs address this issue using a
prescribed admissible backup policy. 

\subsection{Backup Control Barrier Functions}
\label{sec:backup_cbf}
For a time-invariant safe set $\calC$, let $\calS_0\subseteq\calC$ be
a terminal set that is forward invariant under a continuously differentiable
state-feedback policy $\pib:\calX\to\calU$. For a backup horizon $T>0$, the recoverable set induced by $\pib$
contains states whose backup trajectories remain in $\calC$ over $[0,T]$
and reach $\calS_0$ at $T$
\cite{chen_backupcbf_2021,kim_backupbased_2026}. Its size depends on the
choice of backup policy. 

The closed-loop backup dynamics are
$\fb(\vx)=f(\vx)+g(\vx)\pib(\vx)$. Their flow $\phib(\vx,s)$ satisfies
\begin{equation}
\frac{\mathrm d}{\mathrm ds}\phib(\vx,s)
=\fb\bigl(\phib(\vx,s)\bigr),
\qquad \phib(\vx,0)=\vx,
\label{eq:backup_flow}
\end{equation}
where $s\geq0$ is the rollout time. The sensitivity Jacobian
$\Phib(\vx,s):=\partial\phib(\vx,s)/\partial\vx$ satisfies
\begin{equation}
\frac{\mathrm d}{\mathrm ds}\Phib(\vx,s)
=\frac{\partial\fb}{\partial\vx}\bigl(\phib(\vx,s)\bigr)\Phib(\vx,s),
\quad \Phib(\vx,0)=I. \nonumber
\end{equation}
Since the backup dynamics are autonomous,
\begin{equation}
\Phib(\vx,s)\fb(\vx)
=\fb\bigl(\phib(\vx,s)\bigr).
\label{eq:flow_identity}
\end{equation}
The Backup CBF filter imposes separate safety conditions along the backup
trajectory and at its terminal state. In the predictive setting considered
here, the backup trajectory must be evaluated against obstacle occupancy at
the corresponding future times, together with a terminal condition that
ensures a safe continuation beyond the prediction horizon. 

\section{PROBLEM FORMULATION}
\label{sec:problem}

Consider planar navigation in $\calW\subseteq\R^2$ for the robot
in~\eqref{eq:dynamics}. Let $\mathcal J$ denote the finite set of objects
included in the safety model, $\calO_j(\tau)\subseteq\calW$ the region
occupied by object $j$ at time $\tau$, and $\vp_j(\tau)\in\calW$ its
reference position. Let $\mathcal R(\vx)\subseteq\calW$ denote the robot's
occupied geometry. RGB-D observations provide object geometry and motion
history used for physical-event prediction. 

\begin{assumption}[Object inventory and geometric bounds]
\label{ass:object_scope}
The object set $\mathcal J$ is specified independently of the prediction
output. For each $j\in\mathcal J$, a known radius $r_j>0$ bounds the
object's planar extent about $\vp_j(\tau)$ for all considered orientations.
\end{assumption}

\begin{assumption}[Exogenous object motion]
\label{ass:exogenous}
Over the prediction horizon, the future motion of each $j\in\mathcal J$
is independent of the candidate robot input evaluated by the safety filter.
\end{assumption}

The safety claim is limited to the objects in $\mathcal J$; detecting
previously unmodeled hazards is outside the scope of the formulation.
Likewise, robot-induced interactions are not covered by
Assumption~\ref{ass:exogenous} and would require action-conditioned
predictions or conservative bounds on their effects.

\begin{problem}[Safety filtering with visual physical reasoning]
\label{prob:predictive_safety}
Given the robot dynamics in~\eqref{eq:dynamics}, the nominal controller
$\vu_{\textup{nom}}$, the prescribed backup policy introduced in
Section~\ref{sec:backup_cbf}, and physical-event predictions for the objects
in $\mathcal J$, construct a safety filter that minimally modifies the
nominal input while accounting for prediction uncertainty and preserving
feasibility of the backup maneuver. Under
Assumptions~\ref{ass:object_scope} and~\ref{ass:exogenous}, the desired
safety condition is
\begin{equation}
\dist\bigl(\mathcal R(\vx(\tau)),\calO_j(\tau)\bigr)\geq\epsilon,
\qquad \forall j\in\mathcal J,\ \forall \tau,
\label{eq:objective}
\end{equation}
where $\epsilon>0$ is the prescribed clearance.
\end{problem} 

\section{PREDICTIVE SEMANTIC SAFETY}
\label{sec:forecasting}
We address Problem~\ref{prob:predictive_safety} by predicting object
motion, constructing calibrated occupancy sets, and evaluating the
prescribed backup maneuver against them (Fig.~\ref{fig:overview}).
The filter modifies the nominal input to preserve backup feasibility.

Let $t_k$ denote the observation time and $\tau\geq t_k$ an absolute
future time. We suppress the fixed $t_k$ except when comparing
predictions from different observation times.

\subsection{Visual Physical Reasoning}

Physical interactions can alter an object's motion before the resulting
hazard is apparent from current geometry. Loss of support can cause a
suspended object to fall, while impact can set a stationary object in
motion. We therefore use visual observations to infer how an object's
motion may change and when that change may occur, \emph{rather than predicting
a semantic risk label alone}.

We use Code-as-World-VL~\cite{wang2026codeasworlds}, a pretrained
VLM, for visual physical reasoning. Given a recent RGB clip and object
motion history measured from RGB-D, a first query interprets the observed
physical interaction. A second structured query uses the same observations
and the first response to identify the object, its attachment state, the
predicted motion mechanism, and its timing. For the ceiling fixture, the
timing is represented by a detachment-time interval. We then use a motion
model to convert this event hypothesis into metric object trajectories,
representing the fixture as supported before detachment, falling after
detachment, and stationary after ground contact.

The predicted detachment-time interval is treated as uncertain. After
verifying that the structured VLM output contains the required fields and
valid numerical values, we sample candidate detachment times within the
interval and generate the corresponding trajectories with the motion
model. For the stack scenarios, the VLM instead predicts object
displacement directly, without a separate detachment-time model. In both
cases, the resulting motion predictions are calibrated in
Section~\ref{sec:conformal}.

\subsection{Event-Conditioned Object Motion}

For object $j\in\mathcal J$, let $m_j$ be the predicted motion mechanism
and $\Theta_j$ a nonempty finite set of motion hypotheses. Each
$\theta\in\Theta_j$ specifies an event realization, such as a detachment
time relative to $t_k$. Given the measured position and linear velocity
$\vz_j\in\R^6$ at $t_k$, the motion model predicts
\begin{equation}
\widehat{\boldsymbol q}_j(t_k+s;\theta)
=\Psi_{m_j}(\vz_j,s;\theta),
\qquad \theta\in\Theta_j,
\label{eq:event_rollout}
\end{equation}
where $s\geq0$ is the look-ahead time and
$\widehat{\boldsymbol q}_j(t_k+s;\theta)\in\R^3$.

Let $\boldsymbol q_j(\tau)\in\R^3$ be the true object reference position
and $\Pi:\R^3\to\R^2$ the planar projection, with
$\vp_j(\tau)=\Pi(\boldsymbol q_j(\tau))$. At $\tau=t_k+s$, define the
mean planar prediction and maximum deviation across hypotheses as
\begin{equation}
\begin{aligned}
\widehat{\vp}_j(\tau)
&=\frac{1}{|\Theta_j|}\sum_{\theta\in\Theta_j}
\Pi\bigl(\widehat{\boldsymbol q}_j(\tau;\theta)\bigr),\\
d_j(\tau)
&=\max_{\theta\in\Theta_j}
\bigl\|
\Pi\bigl(\widehat{\boldsymbol q}_j(\tau;\theta)\bigr)
-\widehat{\vp}_j(\tau)
\bigr\|_2.
\end{aligned}
\label{eq:hypothesis_enclosure}
\end{equation}
By Assumption~\ref{ass:object_scope}, the disk centered at
$\widehat{\vp}_j(\tau)$ with radius $r_j+d_j(\tau)$ contains the
shape disk of each motion hypothesis. The spread $d_j$ does not
account for shared prediction errors or unrepresented event
realizations. For $|\Theta_j|=1$, $d_j(\tau)=0$.

\subsection{Conformal Occupancy Prediction}
\label{sec:conformal}

We use split conformal prediction to calibrate the complete
observation-to-motion prediction procedure
\cite{angelopoulos_conformal_2023,stamouli2024recursive}.
We restore the observation-time argument as
$\widehat{\vp}_j(\tau;t_k)$ and $d_j(\tau;t_k)$.
Let $\calI$ be a fixed finite set of triples $(j,k,\tau)$, with
$j\in\mathcal J$ and $\tau>t_k$, specifying the objects, observation
times, and future times to be covered. The set $\calI$ is fixed
independently of prediction outcomes.

We split the data by complete trajectories into disjoint training
and calibration sets. Each calibration example includes all
observations and object motions over $\calI$. The prediction
procedure and positive finite scales $\sigma_j(\tau;t_k)$ are fixed
independently of the calibration data and remain unchanged during
calibration and testing. The scales are computed from training data.

For each calibration trajectory $c=1,\ldots,N_{\textup{cal}}$, define the
normalized nonconformity score
\begin{equation}
R_c
=
\max_{(j,k,\tau)\in\calI}
\frac{
\bigl\|
\vp_j^{(c)}(\tau)
-\widehat{\vp}_j^{(c)}(\tau;t_k)
\bigr\|_2
}{
\sigma_j(\tau;t_k)
}.
\label{eq:score}
\end{equation}
The normalization accounts for differences in prediction-error scale
across objects and horizons~\cite{yu_stl_2026}. A missing or invalid
required prediction gives $R_c=+\infty$.

Fix the miscoverage level $\delta\in(0,1)$ and suppress dependence on it.
Define the conformal quantile
\begin{equation}
\widehat q
\coloneqq
\operatorname{Quantile}_{1-\delta}
\bigl(
R_1,\ldots,R_{N_{\textup{cal}}},+\infty
\bigr),
\label{eq:quantile}
\end{equation}
where $\operatorname{Quantile}_{1-\delta}$ denotes the
$\left\lceil(N_{\textup{cal}}+1)(1-\delta)\right\rceil$-th smallest
element of its arguments.
Let $\calB(r)=\{\boldsymbol e\in\R^2\mid\|\boldsymbol e\|_2\leq r\}$
be the closed planar ball. With calibrated radius
$\rho_j(\tau;t_k)\coloneqq\widehat q\,\sigma_j(\tau;t_k)$, define
\begin{equation}
\mathcal T_j(\tau;t_k)
\coloneqq
\widehat{\vp}_j(\tau;t_k)+\calB\bigl(\rho_j(\tau;t_k)\bigr).
\label{eq:prediction_region}
\end{equation}
If the test prediction is invalid or $\widehat q=+\infty$, set
$\mathcal T_j(\tau;t_k)=\R^2$.

By Assumption~\ref{ass:object_scope},
$\calO_j(\tau)\subseteq\vp_j(\tau)+\calB(r_j)$.
Combining this shape bound with the conformal position region and
hypothesis spread gives the predicted object occupancy
\begin{equation}
\widehat{\calO}_j(\tau;t_k)
\coloneqq
\mathcal T_j(\tau;t_k)
\oplus
\calB\bigl(r_j+d_j(\tau;t_k)\bigr),
\label{eq:inflation}
\end{equation}
where $\oplus$ denotes the Minkowski sum. If the required prediction is
invalid or no motion hypothesis is available, we set
$\widehat{\calO}_j(\tau;t_k)=\R^2$.

\begin{proposition}[Simultaneous occupancy coverage]
\label{prop:containment}
Suppose the prediction procedure and normalization scales are fixed
independently of the calibration data, the calibration and test
trajectories, including their observations, are exchangeable, and
Assumption~\ref{ass:object_scope} holds over $\calI$. Then
\begin{equation}
\mathbb P\!\left(
\calO_j(\tau)\subseteq\widehat{\calO}_j(\tau;t_k),
\quad \forall (j,k,\tau)\in\calI
\right)
\geq 1-\delta.
\label{eq:coverage}
\end{equation}
\end{proposition}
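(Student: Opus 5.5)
The plan is to reduce the simultaneous occupancy statement to a single scalar event on the test trajectory's normalized score, and then apply the standard split-conformal quantile lemma. For the test trajectory, define its score $R_{\textup{test}}$ exactly as in~\eqref{eq:score}, using the same index set $\calI$, the same fixed prediction procedure, and the same scales $\sigma_j(\tau;t_k)$. We set $R_{\textup{test}}=+\infty$ whenever a required prediction is invalid.

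\emph{Step 1 (coverage of the score).} Because the prediction procedure and the scales are fixed independently of the calibration data, each $R_c$ is the same deterministic function applied to trajectory $c$ (including its observations), and $R_{\textup{test}}$ is that function applied to the test trajectory. Exchangeability of the trajectories therefore transfers to exchangeability of $(R_1,\ldots,R_{N_{\textup{cal}}},R_{\textup{test}})$, with values in $\R\cup\{+\infty\}$. The usual rank argument then gives
\[
\mathbb P\bigl(R_{\textup{test}}\leq \widehat q\bigr)\geq 1-\delta ,
\]
with $\widehat q$ as in~\eqref{eq:quantile}. The argument runs as follows. The rank of $R_{\textup{test}}$ among the $N_{\textup{cal}}+1$ scores is stochastically no larger than uniform, with ties broken favorably. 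If $R_{\textup{test}}$ is at most the $\lceil(N_{\textup{cal}}+1)(1-\delta)\rceil$-th smallest of $\{R_1,\ldots,R_{N_{\textup{cal}}},R_{\textup{test}}\}$, it is also at most that order statistic of $\{R_1,\ldots,R_{N_{\textup{cal}}},+\infty\}$, which is $\widehat q$.

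\emph{Step 2 (deterministic inclusion on the good event).} Fix an outcome with $R_{\textup{test}}\leq\widehat q$. If $\widehat q=+\infty$, or if some test prediction is invalid, then $\widehat{\calO}_j=\R^2$ by convention and the inclusion is trivial. Otherwise every prediction is valid and $\widehat q<\infty$. For each $(j,k,\tau)\in\calI$, the definition of the max in~\eqref{eq:score} gives $\|\vp_j(\tau)-\widehat{\vp}_j(\tau;t_k)\|_2\leq \widehat q\,\sigma_j(\tau;t_k)=\rho_j(\tau;t_k)$, so $\vp_j(\tau)\in\mathcal T_j(\tau;t_k)$. Assumption~\ref{ass:object_scope} gives $\calO_j(\tau)\subseteq\vp_j(\tau)+\calB(r_j)$. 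Since $d_j\geq 0$, we have $\calB(r_j)\subseteq\calB(r_j+d_j)$, and monotonicity of the Minkowski sum yields $\calO_j(\tau)\subseteq\mathcal T_j\oplus\calB(r_j+d_j)=\widehat{\calO}_j(\tau;t_k)$. This holds simultaneously for all triples, so the event in~\eqref{eq:coverage} contains $\{R_{\textup{test}}\leq\widehat q\}$, and the bound follows.

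\emph{Main obstacle.} The only delicate point is Step 1 with infinite scores and ties, i.e., justifying the quantile lemma when some $R_c=+\infty$. I would handle it by working in the extended reals. Exchangeability and the order-statistic argument only use the total order, and the appended $+\infty$ in~\eqref{eq:quantile} makes the bound valid even when $\lceil(N_{\textup{cal}}+1)(1-\delta)\rceil=N_{\textup{cal}}+1$. In that case the claim holds trivially via the $\R^2$ convention.
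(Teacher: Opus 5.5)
Your proposal is correct and follows the paper's proof. The split-conformal rank argument on the test score gives $\vp_j(\tau)\in\mathcal T_j(\tau;t_k)$ simultaneously over $\calI$ with probability at least $1-\delta$; on that event, Assumption~\ref{ass:object_scope} and $d_j\geq0$ give the occupancy inclusion deterministically. Your handling of ties, infinite scores, and the $\R^2$ conventions makes explicit what the paper leaves implicit; for precision, note that an invalid test prediction on the event $R_{\textup{test}}\leq\widehat q$ forces $\widehat q=+\infty$, and that is what makes every $\widehat{\calO}_j$ equal to $\R^2$.
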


\begin{proof}
The split-conformal rank argument applied to~\eqref{eq:score} gives
$\vp_j(\tau)\in\mathcal T_j(\tau;t_k)$ simultaneously over $\calI$
with probability at least $1-\delta$. On this event,
Assumption~\ref{ass:object_scope} and $d_j(\tau;t_k)\geq0$ imply
$\calO_j(\tau)\subseteq\mathcal T_j(\tau;t_k)\oplus\calB(r_j)
\subseteq\widehat{\calO}_j(\tau;t_k)$.
\end{proof}

The guarantee is marginal over exchangeable calibration and test
trajectories, not conditional on a scene or an accepted update.
Since~\eqref{eq:score} measures error from the mean prediction,
$d_j$ does not justify reducing $\rho_j$. Moreover,
Proposition~\ref{prop:containment} provides coverage only for the objects
and times included in $\calI$; it does not imply coverage between those
times or feasibility of the robot's response.

\subsection{Backup Rollout and Terminal Set}
\label{sec:recoverability}

Following OcclusionCBF~\cite{kim2026occlusioncbf}, we require the
prescribed backup to avoid the predicted occupancy and reach a terminal
set that admits a safe continuation.

Between accepted updates, fix the prediction and suppress $t_k$.
For current time $t$ and look-ahead time $s\geq0$, let
$\vy=\phib(\vx,s)$. Choose $r_R>0$ such that
$\mathcal R(\vy)\subseteq P(\vy)+\calB(r_R)$ for all relevant states.
Define the collision-inflated predicted occupancy
\begin{equation}
\widehat{\calH}^{(j)}(t,s)
\coloneqq
\widehat{\calO}_j(t+s)
\oplus
\calB(r_R+\epsilon).
\label{eq:collision_occupancy}
\end{equation}
For a valid prediction with finite $\rho_j$, the disk representation
in~\eqref{eq:inflation} gives the separation margin
\begin{equation}
\begin{split}
h_j^C(\vy,t,s)
={}&
\bigl\|
P(\vy)-\widehat{\vp}_j(t+s)
\bigr\|_2
-r_R-r_j\\
&-d_j(t+s)-\rho_j(t+s)-\epsilon.
\end{split}
\label{eq:distance}
\end{equation}
Nonnegativity of $h_j^C$ places $P(\vy)$ outside the interior of
$\widehat{\calH}^{(j)}(t,s)$ and ensures clearance $\epsilon$ between
$\mathcal R(\vy)$ and $\widehat{\calO}_j(t+s)$.
The margin is differentiable in $\vy$ when
$P(\vy)\ne\widehat{\vp}_j(t+s)$. Its time dependence is through $t+s$,
so $\partial h_j^C/\partial t=\partial h_j^C/\partial s$ wherever the prediction is differentiable.
Choose $t_f$ within the common prediction interval and define the
remaining backup horizon $T(t)\coloneqq t_f-t$ for $t\leq t_f$.
The terminal time and backup policy are fixed between accepted updates.

Let
$\calS_0(\tau)\coloneqq
\{\vy\in\calX\mid h_j^S(\vy,\tau)\geq0,\ \forall j\in\mathcal J\}$
for continuously differentiable terminal margins $h_j^S$.
Membership in $\calS_0(t_f)$ must ensure separation at $t_f$ and a safe
continuation under $\pib$ thereafter.

For each absolute future time $\tau\in[t,t_f]$, define the occupancy and
terminal margins evaluated along the backup flow as
\begin{equation}
\begin{aligned}
\xi_j^C(\vx,t;\tau)
&\coloneqq
h_j^C\bigl(\phib(\vx,\tau-t),t,\tau-t\bigr),\\
\eta_j(\vx,t)
&\coloneqq
h_j^S\bigl(\phib(\vx,T(t)),t_f\bigr).
\end{aligned}
\label{eq:backup_margin}
\end{equation}
The resulting recoverable set is
\begin{equation}
\begin{split}
\calS(t)
\coloneqq
\bigl\{\vx\in\calX\ \big|\ &
\xi_j^C(\vx,t;\tau)\geq0,
\quad \forall j\in\mathcal J,\ \forall\tau\in[t,t_f],\\
&
\eta_j(\vx,t)\geq0,
\quad \forall j\in\mathcal J
\bigr\}.
\end{split}
\label{eq:recoverable}
\end{equation}
The set $\calS(t)$ contains states whose backup avoids the predicted
occupancy through $t_f$ and reaches $\calS_0(t_f)$.
In Fig.~\ref{fig:recoverability}, \emph{Semantic-Off} evaluates the
backup against current geometry, whereas \emph{Semantic-On} uses
predicted future occupancy to intervene before the backup becomes
infeasible.

\begin{figure}[t]
\centering
\includegraphics[width=\columnwidth]{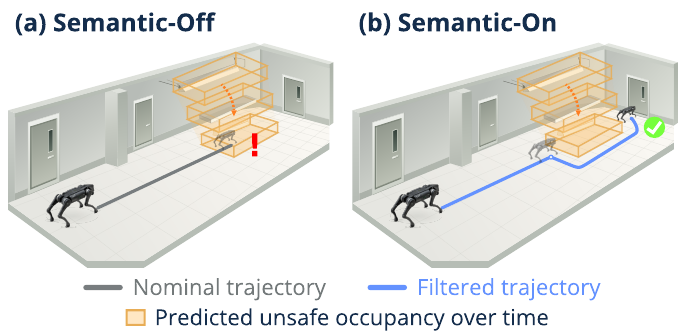}
\caption{Backup feasibility with and without physical-event prediction.
(a) \emph{Semantic-Off} evaluates the backup using current obstacle
geometry, so the resulting motion may continue toward the object's future
occupancy. (b) \emph{Semantic-On} evaluates the same backup against
predicted future occupancy and modifies the nominal motion while the
backup remains feasible. Predicted occupancy is shown in both panels but
is used only by \emph{Semantic-On}. Generated conceptual illustration;
geometry, trajectories, and timing are schematic.}
\label{fig:recoverability}
\end{figure}

\subsection{Input-Affine Safety Filtering}

We impose separate CBF conditions on the occupancy and terminal margins.
For fixed $\tau$, let $s=\tau-t$, $\vy=\phib(\vx,s)$, and
$\vy_f=\phib(\vx,T(t))$. Define
\begin{equation}
\begin{aligned}
A_j^C
&\coloneqq
\bigl(\nabla_{\vy}h_j^C\bigr)^\top\Phib(\vx,s)g(\vx),\\
A_j^S
&\coloneqq
\bigl(\nabla_{\vy}h_j^S\bigr)^\top\Phib(\vx,T(t))g(\vx),
\end{aligned}
\label{eq:affine_coefficients}
\end{equation}
where the gradients are evaluated at $(\vy,t,s)$ and $(\vy_f,t_f)$,
respectively. Since $\tau$ and $t_f$ are fixed,
$\mathrm ds/\mathrm dt=\dot T=-1$.
Using $\partial_t h_j^C=\partial_s h_j^C$ and
\eqref{eq:flow_identity}, differentiation of~\eqref{eq:backup_margin}
gives $\dot\xi_j^C=A_j^C(\vu-\pib(\vx))$ and
$\dot\eta_j=A_j^S(\vu-\pib(\vx))$.
We suppress the arguments of $A_j^C$, $A_j^S$, $\xi_j^C$, and $\eta_j$
when their evaluation points are clear. The safety filter is then
\begin{equation}
\begin{aligned}
\vu^\star(\vx,t)
={}&
\arg\min_{\vu\in\calU}
\frac12
\bigl\|
\vu-\vu_{\textup{nom}}(\vx,t)
\bigr\|_W^2\\
\textup{s.t.}\quad
&
A_j^C\bigl(\vu-\pib(\vx)\bigr)
\geq
-\alpha_C(\xi_j^C),
\quad
\forall j,\tau,\\
&
A_j^S\bigl(\vu-\pib(\vx)\bigr)
\geq
-\alpha_S(\eta_j),
\quad
\forall j,
\end{aligned}
\label{eq:qp}
\end{equation}
where $j\in\mathcal J$, $\tau\in[t,t_f]$, $W\succ0$, and
$\alpha_C,\alpha_S$ are locally Lipschitz extended class-$\mathcal K$
functions. The predictive constraints are affine in $\vu$. Such QP-based safety filters can be solved at millisecond-level runtime
in robotic systems~\cite{kim2026plcbf}. With finitely
many enforced times, affine static-obstacle constraints, and polyhedral
$\calU$, \eqref{eq:qp} is a QP; for a general compact convex $\calU$, it
remains a convex optimization problem.

For $\vx\in\calS(t)$, $\vu=\pib(\vx)$ satisfies both predictive
constraint families. Under continuous enforcement and suitable
regularity, scalar comparison preserves initially nonnegative margins
between updates. Additional static-obstacle CBF constraints must remain
jointly feasible; higher-order CBFs may be used when required by
relative degree~\cite{xiao_hocbf_2019}.

\subsection{Prediction Updates and Practical Implementation}

Predictions retain their observation times, so delays shorten the
remaining horizon without shifting the predicted event. We accept a
joint update for all objects only after checking object identities,
the prediction interval, current-state membership in the updated
$\calS(t)$, and feasibility of the full filter, including static-obstacle
constraints. After rejection, the previous prediction remains active
only while its own object scope, time interval, and filter feasibility
remain valid.

A missing prediction is not treated as an absent obstacle. If no valid
prediction remains, the controller executes an emergency response,
which does not itself certify safety outside $\calS(t)$. In implementation, the rollout constraints are enforced at finitely many
times in $[t,t_f]$.

\begin{figure*}[t]
\centering
\includegraphics[width=0.95\textwidth]{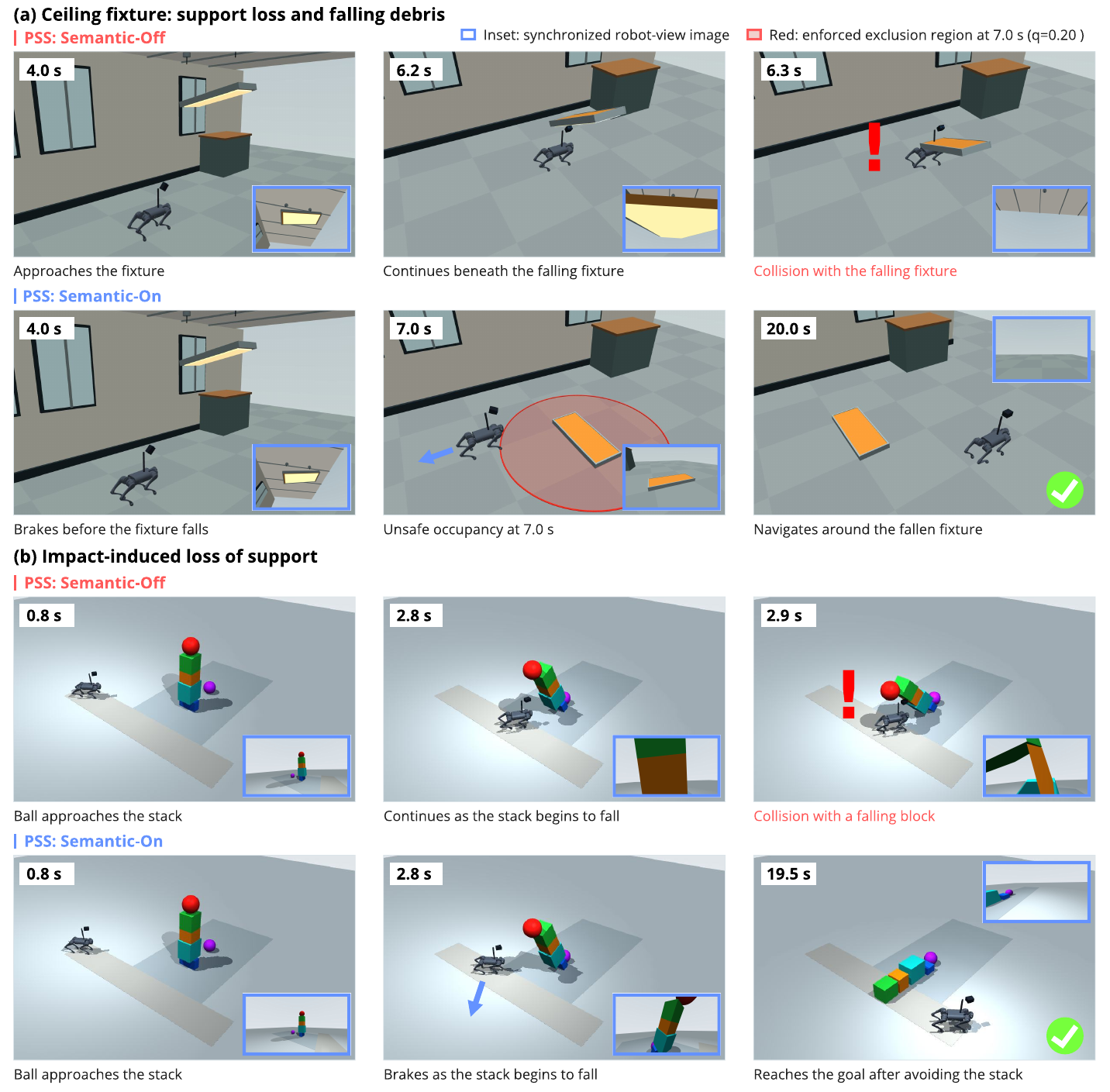}
\caption{Selected MuJoCo trials with physical-event prediction disabled
(Semantic-Off) and enabled (Semantic-On).
(a) Ceiling fixture: Semantic-Off continues beneath the falling fixture
and collides, whereas Semantic-On brakes before full detachment and later
navigates around the debris. The red region shows the unsafe occupancy
used by the filter at 7.0~s.
(b) Impact-induced loss of support: Semantic-Off continues as the stack
begins to fall and is struck by a block, whereas Semantic-On brakes and
subsequently reaches the goal. Insets show synchronized robot-view images.
These qualitative trials are separate from the aggregate benchmark.}
\label{fig:simulation}
\end{figure*}

\section{RESULTS}
\label{sec:experiments}

We evaluate PSS in quadruped navigation with hazards arising from support
loss, impact, and contact propagation, where the resulting future unsafe
occupancy is not evident from current scene geometry alone. The experiments
examine whether anticipating changes in object motion improves collision
avoidance over filters that rely on current obstacle geometry. We report
safe episode rates and use selected trials to examine when the robot brakes
and how it resumes navigation after the event.

\subsection{Experimental Setup}

\textbf{Robot and controller setup: }
We simulate a Unitree Go1 in MuJoCo. PSS uses onboard RGB-D observations
for visual physical reasoning and metric motion estimation. The safety
filter uses the planar double-integrator model
$\vx=[\vp^\top,\vv^\top]^\top$,
$\dot\vp=\vv$, and $\dot\vv=\vu$, where $\vu$ is planar acceleration.
The prescribed backup policy is
\[
\pib(\vx)=\operatorname{sat}_{a_{\max}}
\bigl(-k_{\textup{b}}\vv\bigr),
\]
where $k_{\textup{b}}=2.5\,\mathrm{s}^{-1}$ and saturation is applied
componentwise at $a_{\max}=2.0\,\mathrm{m/s^2}$.
The implementation uses $r_R=0.403$~m, $\epsilon=0.05$~m,
$\alpha_C(h)=\alpha_S(h)=3h$, and $W=I$.
The QP is solved using OSQP without slack variables. The nominal acceleration is proportional to the difference between
a goal-directed velocity reference and the previously commanded
velocity, with gain $2.5\,\mathrm{s}^{-1}$ and componentwise limits
of $1.5\,\mathrm{m/s^2}$. The filtered acceleration is integrated to
obtain the velocity command supplied to the locomotion policy. The reduced-order model does not by itself certify full-body
tracking. Learned-model inference is performed on one NVIDIA A100 GPU.

\textbf{Prediction and sampling: }
We use pretrained Code-as-World-VL-9B without further fine-tuning.
The safety filter and locomotion policy update every 0.02~s, and
RGB-D acquisition is scheduled every 0.05~s. The prediction runners
request updates at 1~s intervals. For the ceiling fixture, trajectories
are computed over 8~s at 0.1~s intervals using up to 17 evenly spaced
detachment-time hypotheses. For the stack and domino scenarios, the
VLM predicts five future positions at 0.5~s intervals over 2.5~s.
The constraint grid includes the prediction nodes and subdivides
intervals to keep their spacing at most 0.15~s. For conformal calibration, we set $\delta=0.05$ and use $N_{\textup{cal}}=1000$. Following
\cite{stamouli2024recursive},
$\sigma_j(\tau;t_k)$ is set to the maximum training-set position error
at the corresponding prediction time.

\textbf{Scenarios: }
We consider a ceiling fixture that loses support, a rolling ball that
destabilizes a stack, and contact propagation through falling dominoes.
The ceiling fixture requires reasoning about a possible fall while the
space beneath it remains clear. The stack and domino scenarios involve
motion induced by interactions among objects and are inspired by
Physion~\cite{bear2021physion}. These cases test whether the robot responds
to the predicted consequences of a physical interaction rather than only
to obstacles already in its path. Figure~\ref{fig:simulation} shows
selected ceiling-fixture and stack trials.

\subsection{Evaluation Protocol and Compared Methods}

The benchmark contains 30 physical cases, with 10 per scenario type and
five variations of the initial robot position and yaw. This gives
150 configurations per method and 750 episodes across five methods,
including hazardous events and benign controls. The primary metric is the
\emph{safe episode rate}: an episode is counted as safe if the robot
avoids collisions with hazards and fixed obstacles throughout the
evaluation horizon. Stopping
without collision counts as a safe outcome, even if the robot does not
reach the goal.

We compare \emph{Plain CBF}, \emph{OmniVLA},
\emph{Backup CBF (Semantic-Off)}, \emph{PSS without the conformal margin},
and \emph{PSS}. Plain CBF and Backup CBF use current simulator-measured
geometry at 50~Hz. An object enters the obstacle map when its lowest
measured point reaches a height of 0.55~m and remains tracked afterward.
Backup CBF uses the same braking policy, flow sensitivities, and
numerical terminal condition as PSS, but holds measured obstacle
geometry fixed during each backup rollout. PSS instead evaluates the
backup against predicted future occupancy. In the qualitative
comparisons, \emph{Semantic-On} denotes PSS with physical-event
prediction enabled.

OmniVLA~\cite{hirose2025omnivla} uses the unmodified
\texttt{NHirose/omnivla-original} checkpoint with the action head and
pose projector from training step 120,000. Its inputs are the current
forward RGB image, the goal pose relative to the robot, and the
instruction ``move toward the specified goal while avoiding obstacles.''
The model predicts eight waypoints; the fifth is converted to
forward-speed and yaw-rate commands using the official decoder, with
limits of 0.3~m/s and 0.3~rad/s, respectively. Inference is requested
at a nominal rate of 3~Hz in simulation time, and the latest command
is held between updates. These commands are tracked by the shared
Go1 locomotion policy without an additional CBF filter.

\subsection{Safety Performance}

Figure~\ref{fig:benchmark_safety} reports the safe episode rate across
all three scenario types. PSS achieves 99.3\%, compared with 20.0\%
for Plain CBF, 37.3\% for OmniVLA, and 43.3\% for Backup CBF.
The improvement over Backup CBF is 56.0 percentage points. Although
Backup CBF improves on Plain CBF, evaluating a braking maneuver against
current geometry does not account for the changes in object motion
caused by the physical event. The comparison supports evaluating the
backup against predicted future occupancy, rather than relying on the
availability of a braking maneuver alone.

PSS without the conformal margin achieves a safe episode rate of 98.7\%,
corresponding to 148 safe episodes out of 150, compared with 149 for PSS.
Both variants substantially outperform the CBF baselines that use current
geometry, while their difference amounts to one safe episode in this
benchmark. The empirical safe episode rate measures collision avoidance;
it is distinct from the occupancy-coverage probability in
Proposition~\ref{prop:containment}.

\begin{figure}[t]
\centering
\includegraphics[width=\columnwidth]{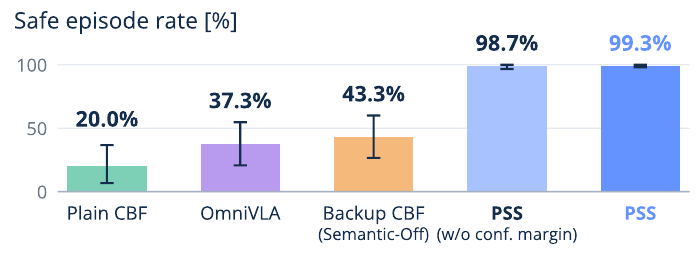}
\caption{Safe episode rates over 750 episodes, with 150 episodes per
method. Error bars show 95\% bootstrap confidence intervals obtained by
resampling physical cases while grouping the five pose variations of
each case. Stopping without collision counts as a safe outcome; goal
completion is not required.}
\label{fig:benchmark_safety}
\end{figure}

\subsection{Intervention Timing and Navigation}

\textbf{Ceiling fixture: }
Figure~\ref{fig:simulation}(a) illustrates the distinction between
current obstacle geometry and predicted unsafe occupancy. With
physical-event prediction disabled, the robot approaches the partially
detached fixture, continues beneath it as it falls, and collides at
6.3~s. PSS instead begins braking while the fixture is still suspended.
The robot remains outside the unsafe occupancy shown at 7.0~s and later
navigates around the fallen fixture, as shown at 20.0~s. Thus, the robot
responds to the anticipated fall before the object obstructs its route,
then continues navigation around the resulting debris.

\begin{figure}[t]
\centering
\includegraphics[width=\columnwidth]{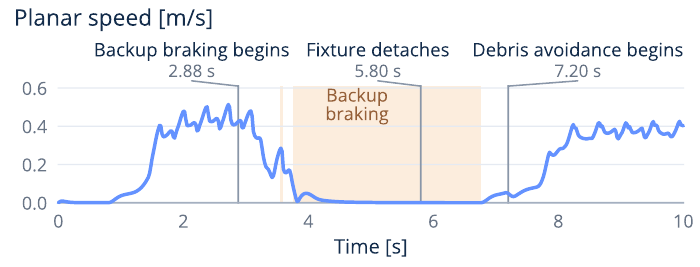}
\caption{Planar speed during the ceiling-fixture trial. Backup braking
begins at 2.88~s, before full fixture detachment at 5.80~s. The robot
remains nearly stationary through detachment and resumes motion during
debris avoidance, which begins at 7.20~s. Annotations mark the three
events.}
\label{fig:ceiling_speed}
\end{figure}

\textbf{Impact-induced loss of support: }
In Fig.~\ref{fig:simulation}(b), a rolling ball destabilizes the stack.
At 2.8~s, Semantic-Off continues as the stack begins to fall and is struck
by a block shortly afterward. Semantic-On brakes as the stack begins to
fall, avoids the falling blocks, and reaches the goal by the 19.5~s
frame. Here, the hazard is not only the approaching ball but also the
motion it induces in the initially stationary stack. The comparison
illustrates the role of predicting the consequences of the impact when
evaluating the robot's backup maneuver. These selected trials are
separate from the aggregate benchmark.

\textbf{Braking before detachment: }
Figure~\ref{fig:ceiling_speed} shows the planar speed during the
ceiling-fixture trial. Backup braking begins at 2.88~s, which is 2.92~s
before full detachment at 5.80~s. The speed decreases to nearly zero by
approximately 4.0~s, and the robot remains nearly stationary through
detachment. Motion resumes during debris avoidance, which begins at
7.20~s. The speed trace therefore confirms that braking precedes full
detachment, while the subsequent acceleration and the trajectory in
Fig.~\ref{fig:simulation}(a) show continued navigation after the event.

\section{CONCLUSION}

We presented PSS, a framework that connects visual physical reasoning
to safety-critical control. PSS converts predicted physical events
into time-varying unsafe occupancy and combines conformal calibration
with backup-based safety filtering. By evaluating a prescribed backup
maneuver against the predicted occupancy, the filter minimally modifies
the nominal input to preserve backup feasibility under the robot
dynamics and input limits. MuJoCo quadruped experiments involving
support loss, impact, and contact propagation demonstrate improved
safe episode rates over filters that rely on current obstacle geometry.
Selected trials show braking before full fixture detachment and
subsequent navigation around the debris, illustrating the importance
of anticipating physical events while a safe response remains feasible.

Future work will extend PSS to robot-induced interactions through
action-conditioned object-motion prediction. We will also incorporate
bounds on locomotion tracking errors into backup evaluation and
validate the framework on physical robots.

\addtolength{\textheight}{0 cm}   

\bibliographystyle{IEEEtran}
\typeout{}
\bibliography{references.bib}

@inproceedings{ames_cbf_2019,
  author        = {Ames, Aaron D. and Coogan, Samuel and Egerstedt, Magnus and Notomista, Gennaro and Sreenath, Koushil and Tabuada, Paulo},
  title         = {Control Barrier Functions: Theory and Applications},
  booktitle     = {European Control Conference (ECC)},
  pages         = {3420--3431},
  year          = {2019},
  doi           = {10.23919/ECC.2019.8796030}
}

@article{ames_cbfqp_2017,
  author        = {Ames, Aaron D. and Xu, Xiangru and Grizzle, Jessy W. and Tabuada, Paulo},
  title         = {Control Barrier Function Based Quadratic Programs for Safety Critical Systems},
  journal       = {IEEE Transactions on Automatic Control},
  volume        = {62},
  number        = {8},
  pages         = {3861--3876},
  year          = {2017},
  doi           = {10.1109/TAC.2016.2638961}
}

@inproceedings{chen_backupcbf_2021,
  author        = {Chen, Yuxiao and Jankovic, Mrdjan and Santillo, Mario and Ames, Aaron D.},
  title         = {Backup Control Barrier Functions: Formulation and Comparative Study},
  booktitle     = {IEEE Conference on Decision and Control (CDC)},
  pages         = {6835--6841},
  year          = {2021},
  doi           = {10.1109/CDC45484.2021.9683111}
}

@inproceedings{kim_backupbased_2026,
	title = {Backup-{Based} {Safety} {Filters}: {A} {Comparative} {Review} of {Backup} {CBF}, {Model} {Predictive} {Shielding}, and gatekeeper},
	shorttitle = {Backup-{Based} {Safety} {Filters}},
	doi = {10.48550/arXiv.2604.02401},
	urldate = {2026-04-06},
	booktitle = {{IEEE} {Conference} on {Decision} and {Control} ({CDC})},
	author = {Kim, Taekyung and Menon, Aswin D. and Trivedi, Akshunn and Panagou, Dimitra},
	year = {2026},
}

@inproceedings{xiao_hocbf_2019,
  author        = {Xiao, Wei and Belta, Calin},
  title         = {Control Barrier Functions for Systems with High Relative Degree},
  booktitle     = {IEEE Conference on Decision and Control (CDC)},
  pages         = {474--479},
  year          = {2019},
  doi           = {10.1109/CDC40024.2019.9029455}
}

@inproceedings{bear2021physion,
  author        = {Bear, Daniel M. and Wang, Elias and Mrowca, Damian and Binder, Felix and Tung, Hsiao-Yu Fish and Pramod, R. T. and Holdaway, Cameron and Tao, Sirui and Smith, Kevin and Sun, Fan-Yun and Li, Fei-Fei and Kanwisher, Nancy and Tenenbaum, Joshua B. and Yamins, Daniel L. K. and Fan, Judith},
  title         = {Physion: Evaluating Physical Prediction from Vision in Humans and Machines},
  booktitle     = {Neural Information Processing Systems (NeurIPS), Datasets and Benchmarks Track},
  year          = {2021}
}

@inproceedings{chow2025physbench,
  author        = {Chow, Wei and Mao, Jiageng and Li, Boyi and Seita, Daniel and Guizilini, Vitor and Wang, Yue},
  title         = {{PhysBench}: Benchmarking and Enhancing Vision-Language Models for Physical World Understanding},
  booktitle     = {International Conference on Learning Representations (ICLR)},
  pages         = {97959--98108},
  year          = {2025}
}

@article{schmidhuber1993predictable,
  author        = {Schmidhuber, J{\"u}rgen and Prelinger, Daniel},
  title         = {Discovering Predictable Classifications},
  journal       = {Neural Computation},
  volume        = {5},
  number        = {4},
  pages         = {625--635},
  year          = {1993},
  doi           = {10.1162/neco.1993.5.4.625},
  note          = {Originally Technical Report CU-CS-626-92, University of Colorado, 1992}
}

@misc{assran2025vjepa2,
  author        = {Assran, Mido and Bardes, Adrien and Fan, David and Garrido, Quentin and Howes, Russell and Komeili, Mojtaba and Muckley, Matthew and Rizvi, Ammar and Roberts, Claire and Sinha, Koustuv and Zholus, Artem and Arnaud, Sergio and Gejji, Abha and Martin, Ada and Hogan, Francois Robert and Dugas, Daniel and Bojanowski, Piotr and Khalidov, Vasil and Labatut, Patrick and Massa, Francisco and Szafraniec, Marc and Krishnakumar, Kapil and Li, Yong and Ma, Xiaodong and Chandar, Sarath and Meier, Franziska and LeCun, Yann and Rabbat, Michael and Ballas, Nicolas},
  title         = {{V-JEPA} 2: Self-Supervised Video Models Enable Understanding, Prediction and Planning},
  howpublished  = {arXiv preprint arXiv:2506.09985},
  year          = {2025},
  doi           = {10.48550/arXiv.2506.09985},
  eprint        = {2506.09985},
  archiveprefix = {arXiv},
  primaryclass  = {cs.AI}
}

@misc{wang2026codeasworlds,
  author        = {Wang, Hanyang and Cai, Yimo and Chen, Weiliang and Chi, Jiawei and Sun, Haowen and Dai, Qiyu and Hung, Yi-Hsin and Guo, Xingzhuo and Ren, Jinshan and Yao, Runmao and Liu, Ziwei and Long, Mingsheng and Duan, Yueqi and Gao, Jun and Lyu, Jiangran and Liu, Fangfu and Wu, Jialong},
  title         = {{Code as Worlds}: Agentic Discovery of Executable World Representations for Physical Reasoning},
  howpublished  = {arXiv preprint arXiv:2608.27549},
  year          = {2026},
  doi           = {10.48550/arXiv.2608.27549},
  eprint        = {2608.27549},
  archiveprefix = {arXiv},
  primaryclass  = {cs.CV}
}

@article{lindemann_conformal_2023,
  author        = {Lindemann, Lars and Cleaveland, Matthew and Shim, Gihyun and Pappas, George J.},
  title         = {Safe Planning in Dynamic Environments Using Conformal Prediction},
  journal       = {IEEE Robotics and Automation Letters},
  volume        = {8},
  number        = {8},
  pages         = {5116--5123},
  year          = {2023},
  doi           = {10.1109/LRA.2023.3292071}
}

@inproceedings{dixit_adaptive_2023,
  author        = {Dixit, Anushri and Lindemann, Lars and Wei, Skylar X and Cleaveland, Matthew and Pappas, George J. and Burdick, Joel W.},
  title         = {Adaptive Conformal Prediction for Motion Planning among Dynamic Agents},
  booktitle     = {Learning for Dynamics and Control Conference (L4DC)},
  pages         = {300--314},
  year          = {2023}
}

@article{angelopoulos_conformal_2023,
  author        = {Angelopoulos, Anastasios N. and Bates, Stephen},
  title         = {Conformal Prediction: A Gentle Introduction},
  journal       = {Foundations and Trends in Machine Learning},
  volume        = {16},
  number        = {4},
  pages         = {494--591},
  year          = {2023},
  doi           = {10.1561/2200000101}
}

@inproceedings{han2022sgnn,
  author        = {Han, Jiaqi and Huang, Wenbing and Ma, Hengbo and Li, Jiachen and Tenenbaum, Joshua B. and Gan, Chuang},
  title         = {Learning Physical Dynamics with Subequivariant Graph Neural Networks},
  booktitle     = {Neural Information Processing Systems (NeurIPS)},
  pages         = {26256--26268},
  year          = {2022},
  doi           = {10.52202/068431-1904}
}

@inproceedings{yuan2024egode,
  author        = {Yuan, Jingyang and Sun, Gongbo and Xiao, Zhiping and Zhou, Hang and Luo, Xiao and Luo, Junyu and Zhao, Yusheng and Ju, Wei and Zhang, Ming},
  title         = {{EGODE}: An Event-attended Graph {ODE} Framework for Modeling Rigid Dynamics},
  booktitle     = {Neural Information Processing Systems (NeurIPS)},
  pages         = {59093--59118},
  year          = {2024},
  doi           = {10.52202/079017-1885}
}

@inproceedings{ganai2025fortress,
  author        = {Ganai, Milan and Sinha, Rohan and Agia, Christopher and Morton, Daniel and Di Lillo, Luigi and Pavone, Marco},
  title         = {Real-Time Out-of-Distribution Failure Prevention via Multi-Modal Reasoning},
  booktitle     = {Conference on Robot Learning (CoRL)},
  pages         = {283--308},
  year          = {2025}
}

@inproceedings{stamouli2024recursive,
  author        = {Stamouli, Charis and Lindemann, Lars and Pappas, George J.},
  title         = {Recursively Feasible Shrinking-Horizon {MPC} in Dynamic Environments with Conformal Prediction Guarantees},
  booktitle     = {Learning for Dynamics and Control Conference (L4DC)},
  pages         = {1330--1342},
  year          = {2024}
}

@misc{kim2026occlusioncbf,
  author        = {Kim, Taekyung and Park, Hun Kuk and Wada, Renya and Atanasov, Nikolay and Koga, Shumon and Panagou, Dimitra},
  title         = {{OcclusionCBF}: Backup Control Barrier Functions for Safe Navigation Among Hidden Dynamic Obstacles},
  howpublished  = {arXiv preprint arXiv:2609.06342},
  year          = {2026},
  doi           = {10.48550/arXiv.2609.06342},
  eprint        = {2609.06342},
  archiveprefix = {arXiv}
}

@article{yu_stl_2026,
  author        = {Yu, Xinyi and Zhao, Yiqi and Yin, Xiang and Lindemann, Lars},
  title         = {Signal temporal logic control synthesis among uncontrollable dynamic agents with conformal prediction},
  journal       = {Automatica},
  volume        = {183},
  pages         = {112616},
  year          = {2026},
  doi           = {10.1016/j.automatica.2025.112616}
}

@inproceedings{kim_your_2026,
	title = {Is {Your} {Safe} {Controller} {Actually} {Safe}? {A} {Critical} {Review} of {CBF} {Tautologies} and {Hidden} {Assumptions}},
	shorttitle = {Is {Your} {Safe} {Controller} {Actually} {Safe}?},
	language = {en},
	urldate = {2026},
	booktitle = {arXiv preprint arXiv:2603.06954},
	author = {Kim, Taekyung},
	year = {2026},
}

@inproceedings{kim_learning_2025a,
  title = {Learning to {{Adapt Control Barrier Functions Under Epistemic}} and {{Aleatoric Uncertainty}}},
  booktitle = {{{arXiv}} Preprint {{arXiv}}:2504.03038},
  author = {Kim, Taekyung and Kee, Robin Inho and Panagou, Dimitra},
  year = 2026,
  eprint = {2504.03038},
  urldate = {2026},
}

@inproceedings{hirose2025omnivla,
  title={{OmniVLA}: An Omni-Modal Vision-Language-Action Model
         for Robot Navigation},
  author={Hirose, Noriaki and Glossop, Catherine and
          Shah, Dhruv and Levine, Sergey},
  year={2026},
  booktitle     = {IEEE International Conference on Robotics and Automation (ICRA)}
}

@article{kim2026plcbf,
  title={Policy Library CBF: Finite-Horizon Safety at Runtime via Parallel Rollouts},
  author={Kim, Taekyung and Okamoto, Hideki and Hoxha, Bardh and Fainekos, Georgios and Panagou, Dimitra},
  journal={arXiv preprint arXiv:2605.16588},
  year={2026}
}
\end{document}